\documentclass{article}
\pdfpagewidth=8.5in
\pdfpageheight=11in
\usepackage{ijcai19}

\usepackage{times}
\usepackage{soul}
\usepackage{url}
\usepackage[hidelinks]{hyperref}
\usepackage[utf8]{inputenc}
\usepackage[small]{caption}
\usepackage{graphicx}
\usepackage{amsmath}
\usepackage{booktabs}
\usepackage{algorithm2e}
\usepackage{algorithmic}
\urlstyle{same}


\usepackage{paralist}

\newtheorem{theorem}{Theorem}
\newtheorem{lemma}[theorem]{Lemma}
\newtheorem{definition}[theorem]{Definition}
\newtheorem{example}[theorem]{Example}
\newtheorem{proposition}[theorem]{Proposition}

\def\FullBox{\hbox{\vrule width 8pt height 8pt depth 0pt}}

\newcommand{\qed}{\;\;\;\FullBox}

\newenvironment{proof}{\noindent{\bf Proof:~~}}{$\qed$}

\newcommand{\calC}{\mathcal{C}}

\newcommand{\calP}{\mathcal{P}}

 \begin{document}
\title{Query-driven PAC-Learning for Reasoning}
\author{Brendan Juba\\
Washington University in St.\ Louis\\
{\tt bjuba@wustl.edu}}

\maketitle
\begin{abstract}
We consider the problem of learning rules from a data set that support a proof
of a given query, under Valiant's PAC-Semantics. We show how any backward
proof search algorithm that is sufficiently oblivious to the contents of its
knowledge base can be modified to learn such rules while it searches for a proof
using those rules. We note that this gives such algorithms for standard logics
such as chaining and resolution.
\end{abstract}

\section{Introduction}

Machine learning, coupled with plentiful data, promises an approach to
the problem of constructing the large knowledge bases needed for AI.
Whereas traditional knowledge engineering by hand, as exemplified by the CYC 
project~\cite{lenat95}, proved difficult to scale, machine
learning holds the promise of producing a large and consistently interpreted
knowledge base. Of course, any kind of inductive learning faces the danger of
incorrect generalization, and thus such knowledge must use a semantics that is
weaker than classical logic. Valiant~\shortcite{valiant00} proposed 
{\em PAC-Semantics} as a semantics for classical logics that is liberal enough 
to tolerate the imperfect rules produced by models of machine 
learning~\cite{valiant84}. Subsequently, Valiant~\shortcite{valiant06} also
demonstrated that a large knowledge base can be soundly learned from a 
reasonable size data set. Michael and Valiant~\shortcite{mv08} demonstrated
the use of such knowledge bases on a sentence completion task, and
a system using this approach~\cite{im16} was tied for top performance in the
first Winograd Schema Challenge competition~\cite{dmo17}.

Although Valiant~\shortcite{valiant06} showed that there is no {\em statistical}
barrier to learning a large knowledge base, {\em computational} issues from
representing and accessing such a large knowledge base may still arise. One way
of avoiding these issues was proposed by Juba~\shortcite{juba13}, building on
Khardon and Roth's {\em learning to reason} approach~\cite{kr97}: instead of
representing a knowledge base explicitly, Juba decides a query using the data
directly, and guarantees that the result is sufficient to distinguish queries
that have low validity from queries with small proofs using knowledge that
{\em could} have been learned from the data. Thus, the query is answered using
a knowledge base that is only {\em implicitly} learned. Crucially, this approach
applies to settings where some attributes are not observed in the examples used
for learning, and therefore some reasoning may be required. These attributes 
may still be mentioned in the background knowledge and query. For example, we 
may observe medical test results, but not whether a given patient
actually has a given disease.

A drawback of Juba's
approach, however, is that it provides no explicit representation of {\em what}
knowledge could have been learned to support the query. It only provides a set
of proofs for specific examples from the data set. This may not be adequately
interpretable for human oversight of the system; if possible, we would like to
inspect the knowledge that is being used to provide answers to our queries.
Moreover, there are applications for which we are more interested in what
knowledge could have been used to derive the conclusion than we are in the
conclusion itself. For example, such algorithms might be applied to learn a
screening rule for fraud detection as follows: Given a definition of 
behavior that is legitimate and a set of example transaction histories that are 
known to be legitimate, we could seek to learn what properties (if any) of the 
observed portions of these transactions can be used to guarantee that they are 
legitimate by using the definition of legitimate behavior as our query. We can 
then check whether or not these learned properties are observed to hold on future
transactions to decide whether or not they warrant further inspection. Note that 
in this case the query is presumed to hold, and the interesting part is what 
properties we can discover to justify the query. We will discuss a similar but 
more easily formalized application to learning input filters later.

In this work, we show how for a large class of {\em oblivious backward search}
algorithms for reasoning, we {\em can} explicitly identify rules that suffice
to answer queries. Thus, we explicitly identify a sufficient set of relevant 
rules from this ``implicit'' knowledge base in a goal-driven fashion. 
{\em Obliviousness} means that the only effect of the knowledge base on the 
search is to terminate branches early when the subgoal is already present in the
knowledge base. We observe that this suffices to learn such rules for logics 
such as chaining and treelike resolution where there are natural oblivious (or
nearly oblivious) algorithms, e.g., DPLL-like algorithms~\cite{dp60,dll62}.
This is achieved by deeming subgoals to be successful by adding them to the 
knowledge base when they are supported by the data; if the search is obvlivious,
then we obtain the same results as if those formulas had belonged to the 
knowledge base all along.

Our algorithms resemble algorithms arising in inductive logic programming 
(ILP)~\cite{mdr94}, in particular work by Muggleton and Buntine~\shortcite{mb88} 
that constructed rules for resolution; although Muggleton~\shortcite{muggleton91}
anticipated that a connection to PAC-Learning should exist, ILP learning theory
is quite different. The main distinction here is conceptual: ILP treats the 
input examples as {\em defining} a domain for which we seek to synthesize a 
description. By contrast, in PAC-Semantics, we are only seeking to bound the 
probability our formulas are true with respect to some probability distribution 
$D$ over valuations. Relatedly, the examples are simply drawn from $D$, 
and thus only statistically representative of it. We thus do not have complete 
knowledge of $D$ nor do we even have access to the valuation of every formula in
any given example. 
Although this bounding of probability is similar to a probability
logic (e.g., as discussed by Nilsson~\shortcite{nilsson86} for propositional 
logic or Halpern~\shortcite{Halpern90} for first-order), we stress that the
logical languages we use are simple Boolean logics such as chaining and 
resolution, and the probability bounds appear only in our semantics.

\section{Problem formulation}
We now describe the framework we use for learning and reasoning from
partial examples (interpretations). 
First, we describe learning from partial examples,
and give the key definition of concealment that captures when
a credulous strategy for learning from partial 
examples will succeed. (Skeptical learning is considered later.) We then
define a family of backward
search algorithms, the family of algorithms for which we will be able to 
introduce query-driven learning. Since our guarantees will have the form of
ensuring that these reasoning algorithms are as successful as if they had
started their search with the learned knowledge given up-front, we will need
a technical condition that the search algorithms are not too sensitive to the 
contents of the knowledge base they are given up-front---that is, ``oblivious''
to the knowledge base. Indeed, this definition will guarantee that we can 
introduce learned knowledge as the search proceeds without harming its 
performance, which is our main strategy.

\subsection{Learning and reasoning in PAC-Semantics}
Following Valiant~\shortcite{valiant00}, we
will describe our logic in terms of the linear threshold connective (a common
generalization of the usual AND and OR connectives). The linear threshold
connective has the advantage that 
it can capture softened
versions of AND and OR.
\begin{definition}[Threshold connective]
A {\em threshold connective} for a list of $k$ formulas $\varphi_1,\ldots,\varphi_k$
is given by a list of $k+1$ integers, $c_1,\ldots,c_k, b$.
The formula $[\sum_{i=1}^kc_i\varphi_i\geq b]$ is interpreted as follows:
given a Boolean interpretation for the $k$ formulas, the connective is true if 
$\sum_{i:\varphi_i=1}c_i\geq b$.
\end{definition}
\noindent
A threshold connective expresses a $k$-ary AND connective by
taking the $c_i=1$, and $b=k$, and expresses a $k$-ary OR by taking
$c_1,\ldots,c_k, b=1$. Negation corresponds to $c_i<0$.

Although we could have taken the weights to be real-valued, on account of the
$\varphi_i$'s essentially taking values from $\{0,1\}$, any real-valued threshold
connective has an equivalent integer connective. We use integers as they are 
simpler to represent and reason about.

\begin{example}\label{threshold-ex}
 Suppose we have one, unary relation symbol $R$ and
six elements in our domain, $x_1,x_2,x_3,x_4,x_5$ and $x_6$. Now, an example of
a formula using a threshold connective is 
$$[5R(x_1)+R(x_2)+R(x_3)+R(x_4)+R(x_5)-R(x_6)\geq 4],$$ i.e., with
formulas $\psi_i=R(x_i)$, weights $c_1=5$, $c_6=-1$, and $c_2,\ldots,c_5=1$, and
a threshold of $b=4$.
\end{example}

We will assume a finite domain, and hence a finite (but possibly large) number 
of ground atomic formulas, $N$. Thus, our setting is essentially propositional.
We will formulate our exposition in terms of a simplified first-order
language without functions, in which free variables are taken to be universally 
quantified; these quantified formulas are of course equivalent to a 
quantifier-free (propositional) formula given by an AND (expressed by a 
threshold connective) over copies of the formula with each possible binding of 
the free variables. This is an example of the standard 
{\em ``propositionalization''} transformation, and a suitable family of formulas
for these purposes is described by Valiant~\shortcite{valiant00}.
Nevertheless, this representation captures standard settings of (function-free)
resolution and chaining with Horn KBs.

The main feature of PAC-Semantics is a probability distribution $D$ on
interpretations of the relation symbols, i.e., assignments of truth values to 
their groundings. Equivalently, we take each ground  atomic formula as a 
Boolean-valued random variable. We stress that we do not 
assume independence (or any other relationship) between these random variables. 
Given an interpretation drawn from $D$, the semantics of a formula are then 
defined classically. $KB\models\varphi$ denotes that $\varphi$ is a 
{\em (classically) valid formula given the knowledge base (set of formulas) 
$KB$}.
We denote the
truth value of a formula $\varphi$ under an interpretation $x$ as $\varphi|_x$. 
We may view an interpretation as a Boolean vector 
indexed by the set of ground atomic formulas.
Following Valiant~\shortcite{valiant00}, we refer to interpretations of the 
ground atomic formulas as {\em scenes}.

\begin{definition}[$(1-\epsilon)$-valid]
A sentence (i.e., formula with no free variables) $\varphi$ is said to be 
{\em $(1-\epsilon)$-valid under $D$} if the 
probability that $\varphi$ evaluates to true under an 
interpretation drawn from $D$ is at least $1-\epsilon$.
If $\epsilon=0$, we say that $\varphi$ is {\em perfectly} valid.
\end{definition}

Now, an {\em obscured scene} is a {\em partial} interpretation of the 
ground atomic formulas of the logic:

\begin{definition}[Obscured scene]
A {\em obscured scene} $\rho$ is a mapping taking ground atomic formulas to 
$\{0,1,*\}$ where * denotes an ``unknown'' value. We say that an obscured scene 
$\rho$ is {\em consistent} with an interpretation if whenever $\rho$ assigns an 
ground atomic formula a value other than $*$, the interpretation agrees with
$\rho$.
\end{definition}

We need obscured scenes because frequently our knowledge base will refer to
atomic formulas that are not observed in the data we use for learning. Sometimes
these unobserved atomic formulas take the form of properties we wish to reason
about or predict with learned rules. 
Following Rubin~\shortcite{rubin76} and Michael~\shortcite{michael10},
we suppose that a ``masking process'' takes interpretations drawn from 
the distribution $D$ and hides some ground atomic formulas, producing
obscured scenes. 

\begin{definition}[Masking process]\label{def-mask}
A {\em mask} is a function mapping interpretations to obscured scenes that are
consistent with the respective interpretations. A {\em masking process} $M$ is a mask-valued random 
variable (i.e., a random function). We denote the probability distribution over 
obscured scenes obtained by applying a masking process $M$ to a distribution $D$ 
over interpretations by $M(D)$.
\end{definition}

Some natural examples of masking processes that don't use the full expressive
power of the formalism are the following.
\begin{example}\label{mask-ex1}
Consider masking processes
that always produce a mask $m$ that hides the values of a subset of the ground 
atomic formulas, and never hide the rest. Such masking processes capture the
information available in a (learning-driven) program analysis application where 
the examples specify an input and nothing else. The hidden formulas would then 
encode the omitted trace of the program's execution. It also captures the 
information available in typical statistical studies in which a subset of the 
attributes of sampled members of a population are (reliably) recorded, and
the rest are omitted from the data.
\end{example}

\begin{example}\label{mask-ex2}
Consider a masking process that independently tosses a fair coin
to decide whether or not to hide the value of each ground atomic formula in a
given scene. So $M$ produces $m$ by sampling a set $S$ at random by tossing a 
fair coin for each ground atomic formula, and then the corresponding $m$ is of
the type described in Example~\ref{mask-ex1} -- it hides the ground atomic 
formulas in the random set $S$ and no others. This captures a setting where, 
due to noise corrupting a transmission, only portions of a scene can be decoded.
\end{example}

These examples do not use the ability of a masking process to optionally hide an
atom depending on the underlying truth value, but our definition allows this. 

\begin{example}\label{mask-ex3}
Consider a setting where in a survey, participants are allowed to decline to
answer a question. Naturally, one might find that when participants have (for
example) atypically high or low income, or where they possess minority political
opinions, or suffer from certain kinds of diseases, they might be less inclined
to provide an answer. Thus, in such a setting, the survey responses would be
more naturally modeled by a masking process in which, given that the sampled
member of the population falls into one of these categories, the probability of
the value being obscured is much higher than otherwise. Indeed, the model also
allows for the decision to mask to depend on more than one attribute of the 
example (note that $m(x)$ is allowed to depend on the entire interpretation $x$) 
-- e.g., $m(x)$ may omit the value of the formulas encoding the political 
inclinations only if they are relatively inconsistent with some other attributes 
of the respondent encoded by $x$.
\end{example}

For a given ground atomic formula $\alpha$, a PAC-Learning
algorithm could be used to learn a formula $\varphi$ that 
predicts $\alpha$, in which case the formula 
$[\varphi\equiv\alpha]$ is $(1-\epsilon)$-valid with probability 
$1-\delta$ over the random example scenes (for $\delta$ given to the algorithm): 
if we may take the truth value of $\alpha$ as a {\em label} and the truth values
of the ground atomic formulas as {\em attributes} for the example scenes, then 
PAC-Learning uses such examples to produce precisely such a formula $\varphi$ as
output.
Using such a rule, we could hope to infer the value of $\alpha$ in examples in 
which it is obscured. Such an approach was proposed by Valiant~\shortcite
{valiant00}, and we will return to it later.

Following Juba~\shortcite{juba13}, we will consider the following operation
that uses obscured scenes to partially evaluate quantifier-free
formulas defined using linear threshold connectives. Again, these could 
have been obtained from first-order formulas by propositionalization. Note that 
the recursive definition corresponds to a linear-time algorithm for computing 
these partially evaluated formulas:
\begin{definition}[Partial evaluation and witnessing]
Given an obscured scene $\rho$ and a quantifier-free formula $\varphi$, the 
{\em partial evaluation of $\varphi$ under $\rho$,} denoted $\varphi|_\rho$, is 
recursively defined as follows; when the partial evaluation produces a Boolean
constant, we say that the formula is {\em witnessed}:
\begin{compactitem}
\item 
A ground atomic formula $\varphi$ is replaced by its value under $\rho$ 
(i.e., it is witnessed) unless this value is *, in which case it remains $\varphi$.
\item If $\varphi=\neg\psi$ and $\psi$ is not witnessed in $\rho$, then $\varphi|_\rho=\neg(\psi|_\rho)$; otherwise, $\varphi|_\rho$ is witnessed to be $\neg(\psi|_\rho)$.
\item 
For $\varphi=[\sum_{i=1}^kc_i\psi_i\geq b]$,
\begin{compactitem}
\item $\varphi$ is witnessed true if 
$\sum_{i:\psi_i\mathrm{\ witnessed\ true}}c_i
+\sum_{i:\psi_i\mathrm{\ not\ witnessed}}\min\{0,c_i\}\geq b,$ 
\item $\varphi$ is witnessed false if 
$\sum_{i:\psi_i\mathrm{\ witnessed\ true}}c_i
+\sum_{i:\psi_i\mathrm{\ not\ witnessed}}\max\{0,c_i\}<b,$ 
\item and otherwise,
supposing that $\psi_1,\ldots,\psi_\ell$ are witnessed in $\rho$ (and 
$\psi_{\ell+1},\ldots,\psi_k$ are not witnessed),
$\varphi|_\rho$ is 
$[\sum_{i=\ell+1}^kc_i(\psi_i|_\rho)\geq d]$
where $d=b-\sum_{i:\psi_i|_\rho=1}c_i$.
\end{compactitem}
\end{compactitem}
\end{definition}

\begin{example}
Continuing Example~\ref{threshold-ex}, in any partial scene in which $R(x_1)$ is witnessed true, we find that
$\sum_{i:\psi_i\ \text{witnessed true}}c_i+\sum_{i\psi_i\ \text{not witnessed}}\min\{c_i,0\}$ is at least $5-1=4$, so the formula will be witnessed true. Likewise,
if $R(x_2),\ldots,R(x_5)$ are true and $R(x_6)$ is false, then the formula is
again witnessed true. But, if $R(x_1)$ is false and $R(x_6)$ is true, then the
formula is witnessed false, as it is if $R(x_1)$ is false and any of $R(x_2),
\ldots,R(x_5)$ are false. Finally, the formula is not witnessed if, for example,
$R(x_1)$ and $R(x_6)$ are both false, and any proper subset of $R(x_2),\ldots,
R(x_5)$ are true.
\end{example}

Following Michael~\shortcite{michael10}, we 
consider learning from example obscured scenes, provided that the value of 
$\alpha$ is not obscured in too many of the examples.
Essentially we distinguish formulas that are {\em 
perfectly} valid under the unknown distribution from those that are not even
$(1-\epsilon)$-valid for some given $\epsilon>0$. The main finding is that the 
learnability of such rules is controlled by the 
probability of observing counterexamples to flawed rules under the masking
process. 

\begin{definition}[Concealment]\label{def-conceal}
We say that a masking process $M$ is (at most) {\em $(1-\eta)$-concealing} 
with respect to a set of formulas $\calC$ and a distribution over 
interpretations $D$ if
\[
\forall\varphi\in\calC \Pr_{x\in D,m\in M}[\ \varphi\mathrm{\ witnessed\ on\ }m(x)\ |\ \varphi|_x=0\ ]\geq\eta.
\]
\end{definition}

We observe that the degree of concealment of a family of formulas depends 
on the family of formulas, the distribution over scenes, and the masking process.
\begin{example}
In the masking process of Example~\ref{mask-ex1}, in which a fixed subset of the
ground atomic formulas is never hidden and the rest are always hidden, the degree
of concealment depends on whether or not the formula in question is ever
falsified on the distribution, and if so, which ground atomic formulas in the
scene it refers to. Generally, for formulas that only refer to the observed 
ground atomic formulas, the masking process is 0-concealing (i.e., $\eta=1$,
no discounting) but for formulas that only refer to the unobserved ground
atomic formulas, the masking process is 1-concealing ($\eta=0$) and learning is,
strictly speaking, impossible. It may still be possible to infer the values of 
these attributes by reasoning, however, e.g., in the program analysis example we 
may be able to use the program code to infer the values of the program state from
an example input.
\end{example}

\begin{example}
In the case of the masking process of Example~\ref{mask-ex2}, where the ground 
atomic formulas are hidden uniformly at random, the degree of concealment may be
bounded by the number of distinct ground atomic formulas: if we observe all $k$ 
of the ground atomic formulas we certainly observe the formula being falsified, 
and this occurs with probability $1/2^k$. Thus, the masking process in this case
would be $(1-1/2^k)$-concealing ($\eta=1/2^k$). In this case, we can take the 
number of ground atomic formulas that appear in a formula as a measure of its 
complexity. Some natural fragments of logics, such as bounded-width 
resolution, limit the number of atomic formulas that may appear in the lines of 
a proof, and would thus control the degree of concealment for lines of the proof
for this masking process.
\end{example}
\noindent
In the statement of our main theorem, we include the size of the input query 
formula as a parameter, and we suppose that the degree of concealment may depend
on this parameter. This is because it is sometimes possible to bound the number
of distinct formulas that can appear in proofs by the number of proofs, which 
may be bounded similar Lemma~\ref{space-count}, below. The number of
proofs may sometimes depend, in turn, on the number and complexity of the 
premises which are encoded in the query -- consider for example, proofs with a
bounded number of lines. We have included this parameterization to facilitate 
the application of our theorem in such situations.


Bounded concealment justifies a ``credulous'' learning strategy: rules are
satisfactory as long as we do not observe counterexamples to them. 

\begin{proposition}[Theorem 2.2 of \cite{michael10}]
\label{conceal-ce}
For any distribution $D$ over interpretations, masking process $M$, and class
$\calC$ of formulas,
if $M$ is $(1-\eta)$-concealing with respect to $\calC$ and $D$ 
and $\varphi\in\calC$ is not $(1-\epsilon)$-valid, then
$\Pr_{\rho\in M(D)}[\varphi|_\rho=0]\geq \eta\epsilon$.
Conversely, if $M$ is not $(1-\eta)$-concealing with respect to $\calC$ and 
$D$ then there exists $\varphi\in\calC$ such that
$\Pr_{\rho\in M(D)}[\varphi|_\rho=0]\leq \eta\Pr_{x\in D}[\varphi|_x=0]$
(where, note, $\varphi$ is $(1-\Pr_{x\in D}[\varphi|_x=0])$-valid).
\end{proposition}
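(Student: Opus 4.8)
The plan is to reduce both directions to a single structural property of partial evaluation, its \emph{soundness}: whenever an obscured scene $\rho$ is consistent with a full interpretation $x$ and $\varphi|_\rho$ equals a Boolean constant $v$ (i.e., $\varphi$ is witnessed to value $v$ on $\rho$), then in fact $\varphi|_x=v$. I would prove this by induction on the structure of $\varphi$, following the recursive definition of partial evaluation. The atomic case is immediate from consistency, and negation is immediate from the induction hypothesis. For a threshold connective $[\sum_i c_i\psi_i\ge b]$, note that once the induction hypothesis pins down the witnessed $\psi_i$, the smallest value the true sum $\sum_{i:\psi_i|_x=1}c_i$ can take is obtained by counting every unwitnessed term through $\min\{0,c_i\}$; the witnessed-true rule is exactly the statement that this minimum is already $\ge b$, so $\varphi|_x=1$, and symmetrically (via $\max\{0,c_i\}$) for witnessed-false. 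This $\min/\max$ bookkeeping is the one step that needs care.

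Given soundness, the heart of the argument is the event identity
\[
\{\varphi|_{m(x)}=0\}=\{\varphi\text{ witnessed on }m(x)\}\cap\{\varphi|_x=0\},
\]
taken over the joint choice of $x\in D$ and $m\in M$ (writing $\rho=m(x)$). For $\subseteq$: if $\varphi$ is witnessed false on $m(x)$ then it is witnessed, and soundness applied to the consistent pair $(m(x),x)$ gives $\varphi|_x=0$. For $\supseteq$: if $\varphi$ is witnessed on $m(x)$ while $\varphi|_x=0$, then it cannot be witnessed true (soundness would force $\varphi|_x=1$), so it is witnessed false, i.e.\ $\varphi|_{m(x)}=0$.

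For the forward direction I would then invoke this identity, the definition of conditional probability, and the concealment hypothesis, in that order:
\[
\Pr_{\rho\in M(D)}[\varphi|_\rho=0]=\Pr[\varphi\text{ witnessed on }m(x)\mid\varphi|_x=0]\cdot\Pr[\varphi|_x=0]\ge\eta\cdot\Pr_{x\in D}[\varphi|_x=0].
\]
Since $\varphi$ is not $(1-\epsilon)$-valid we have $\Pr_{x\in D}[\varphi|_x=0]>\epsilon$, which yields the claimed bound $\eta\epsilon$. The converse is the same identity read for a single formula: choosing the $\varphi\in\calC$ that witnesses the failure of $(1-\eta)$-concealment, its conditional witnessing probability is strictly below $\eta$, and the very same equation gives $\Pr_{\rho\in M(D)}[\varphi|_\rho=0]<\eta\,\Pr_{x\in D}[\varphi|_x=0]$; the parenthetical validity claim is just the definition of $(1-\epsilon)$-validity with $\epsilon=\Pr_{x\in D}[\varphi|_x=0]$.

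I expect the soundness induction, specifically the threshold case with mixed-sign weights, to be the only genuine obstacle; once it is in hand, both probability estimates follow immediately from the event identity. The conditioning is on a positive-probability event in both directions (the formula is falsified with positive probability in the forward case, and any formula witnessing non-concealment must be too), so no measure-zero convention is needed.
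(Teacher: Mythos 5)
Your proposal is correct and takes essentially the same approach as the paper's own proof: both hinge on the identity $\Pr_{\rho\in M(D)}[\varphi|_\rho=0]=\Pr[\varphi\ \text{witnessed on}\ m(x)\mid \varphi|_x=0]\cdot\Pr_{x\in D}[\varphi|_x=0]$, from which the forward bound follows by the concealment hypothesis together with non-$(1-\epsilon)$-validity, and the converse by reading the same identity for a formula violating concealment (the paper phrases this step contrapositively, you argue it directly---an immaterial difference). Your structural-induction soundness lemma for partial evaluation simply makes explicit what the paper's proof asserts ``by definition.''
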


That is, the degree of concealment controls the discounting of the probability
of observing counterexamples to formulas from $\calC$. We have actually modified
the definitions slightly from the original version by 
including the distribution in the definition of concealment, and moreover, by 
using a notion of witnessed evaluation (as opposed to the value merely being 
{\em determined} by the obscured scene) but the proof is similar.

In the above formulation of PAC-Learning using equivalence rules, this means
that for any incorrect hypothesis in our representation class, the value of 
$\alpha$ should be witnessed (by the masking process) on an example where the 
hypothesis is incorrect with probability at least $\eta$. It turns out that for 
many classes of interest, learning is still possible as long as the masking 
process features an $\eta$ bounded away from zero.

In this work, we are interested in reasoning problems, in which we only
consider proofs of bounded complexity. For simplicity our formulation is again
presented in terms of ground formulas, which could have been obtained by
propositionalization. Formally, we consider the following
family of problems:

\begin{definition}[Search  problem]
Fix a logic, and let $\calP$ be a set of proofs in the logic (e.g., a fragment
of bounded complexity).
The {\em search problem for $\calP$} is then the following promise
problem: given as input a formula $\varphi$ with no free variables and a set
of formulas $KB$ such that either there is a proof of $\varphi$ in $\calP$ from
$KB$ or else $KB\not\models\varphi$, return such a proof in the former case, and
return ``Fail'' in the latter.
\end{definition}

Our first example of such a fragment is a mild generalization of the usual
forward-chaining system that was presented by Valiant~\shortcite{valiant00}. It is 
designed to utilize rules of the form $[\varphi\equiv\alpha]$ in which $\alpha$ is 
an atomic formula, i.e., of the sort obtained from PAC-Learning algorithms.
The main inference rule is {\em chaining}: given a formula of the form $[\varphi
\equiv\alpha]$ in which $\alpha$ is a ground atomic formula, and a consistent 
set of literals (atomic formulas or their negations) 
$\{\ell_1,\ldots,\ell_k\}$ such that for the obscured 
scene $\rho$ that satisfies $\ell_1,\ldots,\ell_k$ (and leaves every ground 
atomic formula not appearing in this list unassigned) $\varphi|_\rho\in\{0,1\}$, if
$\varphi|_\rho=0$, infer $\neg\alpha$, and otherwise (i.e., if $\varphi|_\rho=1$) 
infer $\alpha$. 

In chaining, it is typical to distinguish ground atomic 
formulas (``facts'') and ``rules'' of the form $[\varphi\equiv\alpha]$; in some
formulations, we suppose that only the facts appear as lines of the proof and
take the rules to be rules of inference. This limits the complexity of the
proofs that may appear in the fragment: they are just ordered lists of facts.
In particular, recall that one often considers augmenting the axioms of a logic
with a set of additional formulas -- {\em ``hypotheses''} -- that capture a
specific domain or a specific scene one wishes to reason about. Typically,
these hypotheses are the contents of the knowledge base. In the case of
forward-chaining, for example, often these hypotheses are restricted to be just 
a set of (additional) facts. We will suppose in particular that the set of 
proofs $\calP$ parameterizing our search problem may restrict the formulas that 
may be used as hypotheses in this way.

\subsection{A model of backward search algorithms}
Informally, ``backward'' (goal-directed) algorithms start with a goal query
and repeatedly generate sets of subgoal queries such that if all of the subgoal 
queries succeed -- i.e., if proofs can be found for all of these subgoal queries
-- then the algorithm can construct a proof of the goal query. Although this 
informal description suggests a recursive algorithm, it is 
highly desirable to cache the results of subgoal queries when they are 
answered---beyond the obvious time-efficiency improvements, it is often possible
for a na\"{\i}ve recursive algorithm to get stuck following a circular sequence 
of subgoals. Thus, our model of such algorithms will be stated in terms of a
graph indicating the dependency structure among the (sub)goals considered by the
algorithm. 

This graph would conventionally be an ``AND-OR'' graph:
a query would be associated with an OR node, with edges to various alternative
lists of subgoal queries, represented by AND nodes with edges to the OR nodes
corresponding to the queries in the list. The success of the algorithm
corresponds to the goal query node evaluating to `true' in the natural 
interpretation of such a graph when one more generally associates success at 
finding a proof of a (subgoal) query with a node evaluating to `true.'
Given our interest in using rules expressed using linear threshold connectives,
though, we will find it natural and convenient to replace the AND nodes with 
more general ``linear threshold'' nodes, expressing that success at the node is
achieved if an appropriate subset of the subgoals are successful. 

\begin{definition}[Subgoal dependency graph]
A {\em subgoal dependency graph} is a (possibly infinite) directed graph $G$ in 
which the vertices are either {\em query nodes} labeled with a formula or are 
labeled with an integer {\em threshold} and have outgoing edges labeled
by integer {\em weights}. A {\em partial subgoal dependency graph} also contains,
for each threshold vertex, weights $w_+$ and $w_-$. Given sets of 
{\em successful} nodes $S$ and {\em unsuccessful} nodes $U$ in the partial
graph $G'$, each node $v$ is considered {\em successful using $S,U$} if there is 
an acyclic subgraph of $G'$ featuring $v$ as the (unique) source with sinks from 
$S\cup U$ such that $v$ has a path to every sink and at every non-query node, the sum of the weights on the 
outgoing edges to vertices in $S$ plus $w_-$ and the negative weights on outgoing edges to vertices outside $S$ or $U$ is at least the threshold. Similarly, $v$ is
{\em unsuccessful} if the sum of the weights on outgoing edges to vertices in $S$
plus $w_+$ and the positive weights of edges to vertices outside $S$ and $U$ is 
less than its threshold.
\end{definition}
\noindent
Our rule for determining when a vertex is successful or unsuccessful match
our rules for witnessing connectives true and false, respectively. The weights
$w_+$ and $w_-$ will allow us to determine witnessing with (unwitnessed) 
unrepresented vertices. They represent the total weight of unrepresented 
vertices with positive coefficients and negative coefficients, respectively;
note that the definition of witnessing uses one to establish a formula is
witnessed true and the other to establish that it is witnessed false.  Our 
sucessful vertices will intuitively represent either a provable query, or an 
applicable inference.


The backward search algorithm is now a meta-algorithm (Algorithm~\ref
{generic-backward-alg}) parameterized by three
sub-algorithms. One algorithm, {\tt GENERATE}, generates the subgoal dependency 
graph, and another, {\tt EXPLORE}, chooses edges in the dependency graph to 
explore (as long as the algorithm is not done). The third algorithm, {\tt TEST},
generates a proof of the query if enough of the subgoal dependency graph has been
revealed so that the original goal vertex was successful (given the axioms and a 
knowledge base as successful vertices), or else indicates that the search is not 
successful yet.
Thus, the algorithm explores the subgoal dependency graph (starting from the
original query) until an appropriate collection of successful subgoals is 
discovered or the search algorithm gives up. 

\begin{algorithm}[t]
\DontPrintSemicolon
\SetKwInOut{Input}{input}\SetKwInOut{Output}{output}

\Input{Query formula $\varphi$, set of formulas $KB$}


\Begin{
\If{$\varphi\in KB$}{
 \Return{Trivial proof of $\varphi$}
}
$G\leftarrow$ vertex labeled by $\varphi$, marked ``unexplored''\\
\While{{\tt TEST}$(G,\varphi,KB)=$ FAIL}{
  \If{$v\gets${\tt EXPLORE}$(G,\varphi,KB)$ is not FAIL}{
    \If{$G'\gets${\tt GENERATE}$(G,v)$ is not ``fully explored''}{
      \If{$G'$ contains a vertex not in $G$, not labeled with $\psi\in KB$}{
        Mark the new vertex ``unexplored.''
      }
      $G\leftarrow G'$
    }
    \lElse{
      Remove ``unexplored'' mark from $v$  
    }
  }
  \lElse{
    \Return{{\em Fail}}
  }
}
\Return{{\tt TEST}$(G,\varphi,KB)$}
}

\caption{Backward search meta-algorithm}\label{generic-backward-alg}
\end{algorithm}

\begin{definition}[Backward search algorithm]
\label{bsearch-def}
A {\em backward search algorithm} is given by an instantiation 
of Algorithm~\ref{generic-backward-alg} with three algorithms:
\begin{compactitem}
\item 
An algorithm {\tt EXPLORE} that, given a finite partial subgoal dependency 
graph $G$ with a source labeled by $\varphi$ and a subset of vertices marked 
``unexplored,'' and set of formulas $KB$, chooses an unexplored vertex $v\in G$
or outputs ``FAIL.''
\item 
An algorithm {\tt GENERATE} that, given a partial subgoal dependency graph 
$G$ and a vertex $v\in G$ either returns ``fully explored'' or returns a subgoal 
dependency graph $G'$ that extends $G$ by adding one new edge starting from $v$, 
possibly to a new vertex, and reducing $w_+$ or $w_-$ by its weight if it is 
positive or negative, respectively.
\item 
An algorithm {\tt TEST} that, given a partial subgoal dependency graph $G$,
a query formula $\varphi$ labeling some vertex of $G$, and a set of formulas 
$KB$, either returns a proof of $\varphi$ from $KB$, or if the vertex labeled by
$\varphi$ is not successful using $KB$ and the axioms of the logic, 
returns ``FAIL.''
\end{compactitem}
\end{definition}

A key property possessed by 
instantiations of the backward-search paradigm is that the graph generation 
algorithm is often {\em oblivious} to which queries are successful, the 
algorithm exploring the graph only ``terminates early'' when it encounters a 
vertex labeled by a successful query, and the algorithm recovering the proof 
depends only on the portion of the subgoal dependency graph revealed thus far 
(and the formulas appearing on query vertices appearing in it). 
We will restrict our attention to such {\em oblivious} algorithms.

\begin{definition}[Oblivious backward search algorithm]
\label{oblivious-search-def}
We say that a backward search algorithm is {\em oblivious} if:
\begin{compactenum}
\item For any partial subgoal dependency graph $G$, query $\varphi$ (contained as
a label in $G$), and set of formulas $\Phi$ not appearing as labels 
of query nodes in $G$, $\mathtt{TEST}(G,\varphi,KB)=\mathtt{TEST}(G,\varphi,KB
\cup\Phi)$.
\item For any query $\varphi$ and sets of formulas $KB$ and $\Phi$,
the execution of the algorithm on input $\varphi$ and $KB\cup\Phi$ differs from
the execution on input $\varphi$ and $KB$ only in that the sequence of vertices
proposed by {\tt EXPLORE} on input $\varphi$ and $KB\cup\Phi$ is the subsequence 
of vertices proposed on input $\varphi$ and $KB$ that omits (skips) exploring
vertices in the sequence that are successful from $KB\cup\Phi$ in the partial 
subgoal dependency graph used by the algorithm in the corresponding step.
\end{compactenum}
\end{definition}

\subsubsection{An example: oblivious backward chaining}
We briefly note that standard backward-chaining algorithms (for Horn KBs on
ground atomic formulas) are
oblivious backward search algorithms in the sense of Definition~\ref
{oblivious-search-def}: recall that a {\em Horn clause} is a formula of the
form $[\alpha_1\wedge\cdots\wedge\alpha_k]\Rightarrow\alpha_{k+1}$ where each 
$\alpha_i$ is a ground atomic formula. $\alpha_{k+1}$ is the {\em head} whereas
$\alpha_1\wedge\cdots\wedge\alpha_k$ is the {\em body}. The knowledge base then
consists of such clauses and a set of ground atomic formulas. The query is a 
conjunction of ground atomic formulas, represented as a threshold vertex with a 
threshold equal to the number of atomic formulas in the conjunction. 

The oblivious backward chaining algorithm works as follows: 
{\tt EXPLORE} performs a depth-first search of the subgoal dependency graph, 
terminating a search early only when it encounters an atomic formula in $KB$. 
{\tt GENERATE}, on the other hand, when given a vertex corresponding to a ground
atomic formula, returns an edge to a threshold vertex corresponding to the next 
clause in $KB$ with the given atomic formula appearing in the head (in some 
fixed ordering), with a threshold equal to the number of atomic formulas in the 
body; when given a vertex corresponding to one of the clauses of $KB$ (or the 
goal conjunction), it returns an edge to the vertex labeled with the next atomic
formula in the body (also in some fixed ordering) of weight $1$. Finally, 
{\tt TEST} uses a dynamic programming algorithm to check if the query is 
successful from $KB$ and return a chaining proof if it is. The running
time is bounded by a polynomial in the size of $KB$: it runs
for a linear number of iterations, and {\tt TEST} may take quadratic time on 
each iteration. 

\begin{example}
We now illustrate the backward search algorithm for chaining of Horn rules.
Let's consider the following simple domain, concerning fragile objects. The 
relations are $fragile(x)$, $broken(x)$, $hard(x)$, $crushed(x)$, and 
$hit(x,y)$. We have a KB containing rules 
$[crushed(x)\wedge fragile(x)]\Rightarrow broken(x)$ and 
$[hit(x,y)\wedge fragile(x)\wedge hard(y)]\Rightarrow broken(x)$.
Let's suppose that the domain of objects consists of $sculpture, crate, floor,$
and $sidewalk$. As an example, we might indicate that a fragile
sculpture is crushed -- $fragile(sculpture)$ and $crush(sculpture)$ are given --
and we wish to know if $broken(sculpture)$ holds. The full subgoal dependency 
graph is depicted in Figure~\ref{bs-graph-fig}.

\begin{figure*}[t]
\begin{center}
\includegraphics[width=.8\textwidth]{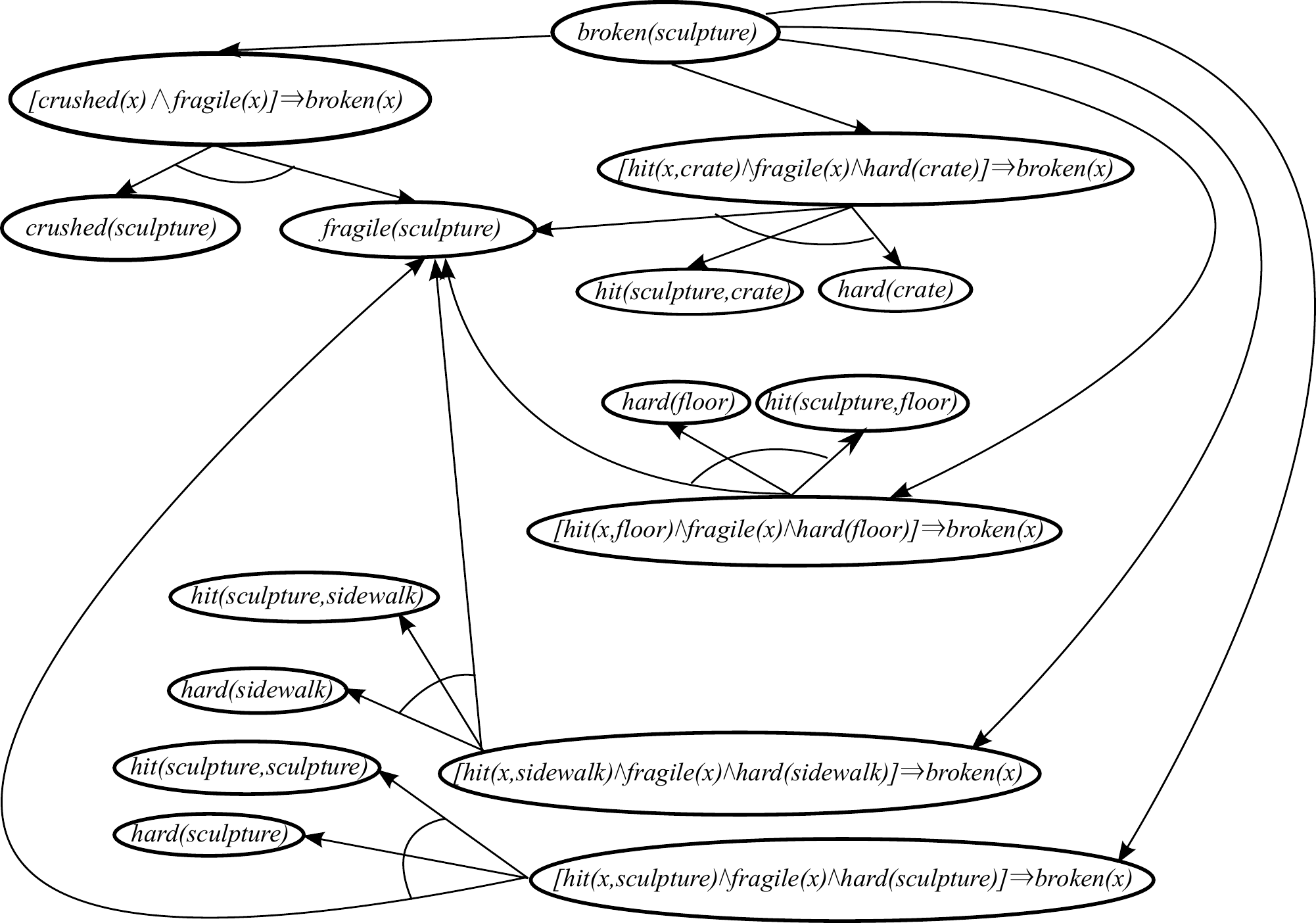}
\caption{A (AND-OR) subgoal dependency graph for the query $broken(sculpture)$ 
with backward chaining using our Horn KB. The threshold nodes are 
labeled with the corresponding groundings of the rules of the KB. ANDs 
are represented by arcs connecting edges.}\label{bs-graph-fig}
\end{center}
\end{figure*}

A short execution of the backward search algorithm starts with the query node,
$broken(sculpture)$. {\tt TEST} on $broken(sculpture)$ determines that this fact
is not given, so the algorithm invokes {\tt EXPLORE} which determines that the
vertex $broken(sculpture)$ is not yet fully explored. So, the algorithm invokes
{\tt GENERATE} on $broken(sculpture)$ which begins generating the rules that 
could produce $broken(x)$ in the head with $sculpture$ substituted for $x$. For 
simplicity, let's suppose that it first considers the rule $[crushed(x)\wedge 
fragile(x)]\Rightarrow broken(x)$, which is such a rule. This threshold vertex
represents an AND on two conjuncts, so it has a threshold of $2$.  As the graph 
is partial, it has values $w_+=2$ since both of the conjuncts (yet to be
generated) have a weight of $1>0$ and $w_-=0$ since this threshold formula does 
not use negative weights. Note that none of the nodes are yet witnessed, so 
{\tt TEST} will continue to the next iteration. 

{\tt EXPLORE} would, if it is a depth-first exploration, choose the new rule to 
explore. So it would invoke {\tt GENERATE} on the rule, which would first yield 
the subgoal node $crushed(sculpture)$ and reduce the value of $w_+$ for the
threshold by $1$ (since one of these nodes is now represented in the graph). 
Note that $crushed(sculpture)$ is in the KB, so this vertex will not be marked 
``unexplored.'' Furthermore, this node is successful, since it is in the KB. 
But, the rule that uses $crushed(sculpture)$ is not yet witnessed since it also 
needs $fragile(sculpture)$, so {\tt TEST} will continue to the next iteration. 

Now, since the new node is not marked ``unexplored,'' the depth-first 
{\tt EXPLORE} would return to the rule, and {\tt GENERATE} next generates the 
subgoal $fragile(sculpture)$, which is also in the KB, and reducing $w_+$ (at 
the rule's vertex) by $1$ to $0$. At this point, {\tt TEST} will discover that 
$crushed(sculpture)$ and $fragile(sculpture)$ suffice to witness the linear
threshold corresponding to $[crushed(sculpture)\wedge fragile(sculpture)]$, so 
this node is successful, which in turn witnesses that the original goal 
$broken(sculpture)$ is successful. Thus, {\tt TEST} returns the chaining proof 
of the query
\begin{compactenum}
\item $crushed(sculpture)$ (hypothesis)
\item $fragile(sculpture)$ (hypothesis)
\item $broken(sculpture)$ (chaining, 1 \& 2, $[crushed(x)\wedge fragile(x)]\Rightarrow broken(x)/x=sculpture$)
\end{compactenum}
Needless to say, the execution would be rather longer if the search had first
started exploring the various domain substitutions for $y$ in the rule $[hit(x,y)
\wedge fragile(x)\wedge hard(y)]\Rightarrow broken(x)$. A breadth-first search
of the graph would have been similarly longer. Note that there are no rules with
$hit(x,y)$ or $hard(x)$ in the head, so although these nodes will initially be
marked ``unexplored'' (in contrast to, say, $fragile(sculpture)$), when
{\tt GENERATE} considers these nodes it will immediately report that they are
``fully explored.'' Thus, after exploring these rules, the search will eventually
return to exploring $[crushed(x)\wedge fragile(x)]\Rightarrow broken(x)$ and
succeed as described above, even in these cases.

Now, suppose that the sculpture hits the floor, so $hit(sculpture, floor)$
holds instead of $crushed(sculpture)$. Now, we are {\em not} given $hard(floor)$
(the floor may be carpeted) so there is no proof of $broken(sculpture)$. In this
case, the backward search algorithm will generate the entire subgoal dependency
graph, before determining that the query is not provable, and terminate with
Fail.
\end{example}

\section{Query-driven learning in backward search}

\begin{algorithm}[t]
\DontPrintSemicolon
\SetKwInOut{Input}{input}\SetKwInOut{Output}{output}

\Input{Query formula $\varphi$, set of formulas $KB$, list of
obscured scenes $\rho_1\ldots,\rho_m$}


\Begin{
$G\leftarrow$ vertex labeled by $\varphi$, marked ``unexplored''\\
\While{{\tt TEST}$(G,\varphi,KB)=$ FAIL}{
  \If{$v\gets${\tt EXPLORE}$(G,\varphi,KB)$ is not FAIL}{
    \If{$G'\gets${\tt GENERATE}$(G,v)$ is not ``fully explored''}{
      \lIf{$G'$ contains a query vertex labeled by a formula $h$ not in $G$
          and for no $\rho_i$ is $h|_{\rho_i}=0$}{
         $KB\leftarrow KB\cup\{h\}$
      }
      \If{$G'$ contains a vertex not in $G$, not labeled with $\psi\in KB$}{
        Mark the new vertex ``unexplored.''
      }
      $G\leftarrow G'$
    }
    \lElse{
      Remove ``unexplored'' mark from $v$  
    }
  }
  \lElse{
    \Return{{\em Fail}}
  }
}
\Return{{\tt TEST}$(G,\varphi,KB)$}
}

\caption{Backward search with query-driven learning}
\label{mod-bw-alg}
\end{algorithm}

The relative blindness of oblivious algorithms to the contents of the knowledge 
base allows us to add new members as the search proceeds, and
obtain the same result as if we had started with them. As some algorithms may
consider families of formulas that scale with the size of the query, in our 
theorem we will parameterize our bounds on the proof size $B$ and degree of 
concealment $\eta$ by the size of the query $\ell$ (in bits). For example, 
adding clauses to a query for resolution generally increases the variety of 
clauses that may be derived. For larger 
families, we expect that the bounds grow weaker. We will use $|\varphi|$ for a
formula $\varphi$ to denote its representation size (in bits), and $|KB|$ to
denote the representation size of the $KB$, again in bits. We assume the $KB$
is represented in a way that ensures $|KB\cup H|\leq |KB|+|H|$, e.g., if it is 
represented as a string in which the elements are separated by special symbols, 
and terminated by another symbol.

\begin{theorem}
\label{explicit-search-thm}
Let $\calP$ be a set of proofs such that proofs of queries of length $\ell$ 
only have proofs with $B(\ell)$-bit encodings in $\calP$ (in some fixed encoding
scheme). Suppose there is an oblivious backward search algorithm for the 
search problem for $\calP$ that on input
$\varphi$ and $KB$ over $N$ ground atomic formulas,
runs in time $T(N,|\varphi|,|KB|)$ (for
a function $T$ that is monotone increasing in $|KB|$).
Let $D$ be a distribution over scenes and $M$ be a masking process that is at 
most $(1-\eta(\ell))$-concealing for the set of formulas $\Phi$ that may be
hypotheses in proofs of formulas of length $\ell$ in $\calP$.
Then for any $\delta,\epsilon\in (0,1)$, on input $\varphi$ and $KB$ and
$\Theta(\frac{1}{\epsilon\eta(|\varphi|)}(B(|\varphi|)+\log 1/\delta))$ example 
obscured scenes, Algorithm~\ref{mod-bw-alg} using the same {\tt EXPLORE},
{\tt GENERATE}, and {\tt TEST} as the given algorithm runs in time 
$O(\frac{B(|\varphi|)}{\epsilon\eta(|\varphi|)}(B(|\varphi|)+\log 1/\delta)
T(N,|\varphi|,C))$ (for $C=|KB|+T(N,|\varphi|,|KB|)$), 
and with probability $1-\delta$
\begin{compactitem}
\item 
returns ``Fail'' if $[KB\Rightarrow\varphi]$ is not $(1-\epsilon)$-valid with respect to $D$, or 
\item 
returns a proof of $\varphi$ from $KB\cup H'$ for a set of formulas 
$H'=\{h'_1,\ldots,h'_k\}$ such that $h'_1\wedge\cdots\wedge h'_k$ is 
$(1-\epsilon)$-valid if there exists a set of perfectly valid formulas $H$
such that there is a proof of $\varphi$ from $KB\cup H$ in $\calP$.
\end{compactitem}
\end{theorem}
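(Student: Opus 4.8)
The plan is to separate the probabilistic content (which rules we may safely learn) from the structural content (that learning on the fly is equivalent to having learned up front). Write $\calA$ for the given oblivious backward search algorithm and let $H'$ denote the set of hypotheses actually used in the proof that Algorithm~\ref{mod-bw-alg} returns (these are among the query-vertex formulas it adds to $KB$). Two observations drive everything. First, by the witnessing rules for a threshold encoding an AND, the conjunction $\bigwedge_{h\in H''}h$ of a hypothesis set $H''$ is witnessed false on an obscured scene $\rho$ exactly when some $h\in H''$ is witnessed false on $\rho$; hence every member of $H''$ survives all $m$ examples (is never witnessed false, so is never rejected by the screening test $h|_{\rho_i}\neq 0$) if and only if $\bigwedge_{h\in H''}h$ is never witnessed false on any of them. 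Second, a returned proof of $\varphi$ from $KB\cup H'$ certifies $KB\cup H'\models\varphi$ perfectly, so $[KB\Rightarrow\varphi]$ holds on every scene on which $\bigwedge_{h\in H'}h$ holds.

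For soundness I would bound the chance of ever outputting a proof whose hypotheses have an insufficiently valid conjunction. There are at most $2^{B(|\varphi|)}$ proofs of $\varphi$ in $\calP$, hence at most that many candidate hypothesis sets. Fix such a set $H''$ and suppose $\Psi=\bigwedge_{h\in H''}h$ is not $(1-\epsilon)$-valid. Treating $\Psi$ as the single formula whose addition is tantamount to adding $H''$, the concealment hypothesis (read as covering these conjunctions) together with Proposition~\ref{conceal-ce} gives $\Pr_{\rho\in M(D)}[\Psi|_\rho=0]\geq\eta(|\varphi|)\epsilon$. Since the $m$ example scenes are i.i.d., by the first observation the probability that all of $H''$ survives is $(1-\Pr_\rho[\Psi|_\rho=0])^m\leq e^{-\eta(|\varphi|)\epsilon m}$. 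A union bound over the $\le 2^{B(|\varphi|)}$ candidate sets makes the total failure probability at most $2^{B(|\varphi|)}e^{-\eta(|\varphi|)\epsilon m}$, which is at most $\delta$ once $m=\Theta(\tfrac{1}{\epsilon\eta(|\varphi|)}(B(|\varphi|)+\log 1/\delta))$. Thus with probability $1-\delta$, every proof the algorithm could return has a $(1-\epsilon)$-valid hypothesis conjunction; by the second observation $[KB\Rightarrow\varphi]$ is then $(1-\epsilon)$-valid whenever a proof is returned, and contrapositively the algorithm must answer Fail when $[KB\Rightarrow\varphi]$ is not $(1-\epsilon)$-valid.

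Completeness, and the claim that the learned knowledge behaves as if supplied up front, is where the real work lies, and I expect it to be the main obstacle. The goal is to show Algorithm~\ref{mod-bw-alg} on $(\varphi,KB)$ executes identically to $\calA$ on $(\varphi,KB\cup H')$. The enabling facts are that {\tt GENERATE} ignores $KB$, and that a query vertex can affect {\tt EXPLORE} or {\tt TEST} only once it has been generated---at which point Algorithm~\ref{mod-bw-alg} has already decided whether to place its label in $KB$. Consequently, inserting an accepted formula at the moment its vertex appears produces the same successful/unsuccessful status, the same early terminations, and (by clause 1 of Definition~\ref{oblivious-search-def}) the same {\tt TEST} behavior as if it had been in $KB$ from the outset; clause 2 then identifies the two executions step for step. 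Given a set $H$ of perfectly valid formulas admitting a proof of $\varphi$ from $KB\cup H$, each $h\in H$ is true on every scene and so is never witnessed false, hence never rejected; therefore whenever the search reaches such an $h$ it is learned. Thus $KB\cup H'$ is at least as strong as $KB\cup H$ along the explored subgraph, and since the underlying algorithm correctly solves the search problem for $\calP$ (it returns a proof whenever one exists from the supplied $KB$), the execution equivalence forces Algorithm~\ref{mod-bw-alg} to return a proof of $\varphi$ from $KB\cup H'$.

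Finally I would account for the resources. The sample size is the $m$ fixed above. For the running time, the number of while-loop iterations is bounded by the size of the generated subgoal dependency graph, and the size of the knowledge base after learning is bounded by $C=|KB|+T(N,|\varphi|,|KB|)$ since each learned formula corresponds to a generated vertex; each iteration runs {\tt TEST} at cost $O(T(N,|\varphi|,C))$ and screens the newly generated formula against all $m$ examples by linear-time partial evaluation. Multiplying these bounds and substituting $m$ yields the stated $O(\tfrac{B(|\varphi|)}{\epsilon\eta(|\varphi|)}(B(|\varphi|)+\log 1/\delta)\,T(N,|\varphi|,C))$ running time. The two points demanding the most care are the reading of the concealment assumption as bounding the conjunctions of hypothesis sets (justified by the AND-witnessing identity, which ties the per-hypothesis screening test to the conjunction) and the step-by-step execution equivalence, which must be checked against both clauses of obliviousness.
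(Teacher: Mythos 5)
Your soundness and sample-complexity argument is essentially the paper's own: the same union bound over the at most $2^{B(|\varphi|)}$ proofs in $\calP$, the same application of Proposition~\ref{conceal-ce} to the conjunction of a proof's premises (with the AND-witnessing identity tying the per-formula screening test to falsification of the conjunction), and the same reduction of the ``Fail'' case to the validity of returned proofs. The resource accounting has a small slip---charging $O(T(N,|\varphi|,C))$ \emph{per iteration} and multiplying by the iteration count gives a $T^2$-type term exceeding the claimed bound; the paper instead charges $T(N,|\varphi|,C)$ once for the entire search and adds the screening cost of $O(m\cdot B(|\varphi|))$ per generated formula---but that is cosmetic.

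The genuine gap is in completeness. You conclude by invoking the given algorithm's correctness on input $(\varphi, KB\cup H')$, but the search problem for $\calP$ is a \emph{promise} problem: $\calA$ is only guaranteed to return a proof when a proof of $\varphi$ from the supplied knowledge base exists in $\calP$, and to return Fail when the knowledge base does not entail $\varphi$. Nothing establishes that the promise holds for $KB\cup H'$: the set $H'$ contains only those members of $H$ whose vertices the search happens to generate (plus possibly unrelated surviving formulas), so $H\not\subseteq H'$ in general, and the assumed proof from $KB\cup H$ need not be a proof from $KB\cup H'$. Your phrase ``$KB\cup H'$ is at least as strong as $KB\cup H$ along the explored subgraph'' is circular, since the explored subgraph is determined by the very execution whose success you are trying to establish; and in the regime where $KB\cup H'\models\varphi$ but no proof lies in $\calP$, the oblivious algorithm may legitimately return Fail, which via your execution equivalence would force Algorithm~\ref{mod-bw-alg} to Fail as well. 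The paper avoids this by never running $\calA$ on the learned set: it compares Algorithm~\ref{mod-bw-alg}, iteration by iteration, against the run of $\calA$ on $(\varphi, KB\cup H)$, for which the promise holds by hypothesis, so that run returns a proof after some $t^*$ iterations. Because every formula of $KB\cup H$ labeling the partial graph is in $KB'_t$ (perfectly valid formulas are never witnessed false, hence always pass screening), monotonicity of vertex success gives that any vertex unsuccessful from $KB'_t$ is unsuccessful from $KB\cup H$; hence if Algorithm~\ref{mod-bw-alg} reaches iteration $t^*$ then $\varphi$ is successful from $KB'_{t^*}$ and {\tt TEST} returns a proof, while obliviousness of {\tt EXPLORE} guarantees it keeps proposing unexplored unsuccessful vertices before that point, so the algorithm cannot return Fail earlier. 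Either way a proof from some $KB'_t\subseteq KB\cup H'$ is returned. Your proof needs this comparison-to-$KB\cup H$ step (or an equivalent device) to close the argument.
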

We remark that the cases are {\em not} exhaustive, for two reasons. First, for 
many logics, the fragments for which proof search can be considered tractable 
are not complete. Second, it may be that the query formula is indeed
$(1-\epsilon)$-valid, but that there is not a set of formulas that we can learn 
in support of it. For example, in chaining it may be that we have $p\to r$ and
$q \to r$, and we never observe $r$ but we do observe either $p$ or $q$ to be 
true, half of the time each. So, we know $r$ is always true, but since neither 
$p$ nor $q$ is consistently true, neither one can be learned.

\begin{proof}
First suppose that there is a set of perfectly valid formulas $H$ such
that there is a proof of the query $\varphi$ from $KB\cup H$ in $\calP$. Since
we have assumed that the given backward search algorithm solves the
search problem for $\calP$, on input $\varphi$ and $KB\cup H$, the given
backward search algorithm would return a proof of $\varphi$ from $KB\cup H$.
Suppose this occurs after $t^*$ iterations.

Now, consider the sets $KB'_t$ and subgoal dependency graphs $G'_t$ used on
each respective iteration $t$ by Algorithm~\ref{mod-bw-alg}. If we consider the 
runs of the given algorithm using $KB'_t$, we see that since it is assumed to
be oblivious, on each step up to $t$, it proposes the same vertex to explore as 
Algorithm~\ref{mod-bw-alg} (i.e., it only omits the vertices that are successful
from $KB'_t$) and thus generates the subgoal dependency graph $G'_t$ on the 
$t$th step. Furthermore, we note that since $KB'_t$ contains the subset of $KB
\cup H$ that appears as labels in $G'_t$, every vertex of $G'_t$ that is not 
deemed successful from $KB'_t$ is likewise not successful from $KB\cup H$ in 
the graph generated on the $t$th step on input $\varphi$ and $KB\cup H$. 
Therefore, if Algorithm~\ref{mod-bw-alg} runs for $t^*$ iterations, $\varphi$ is
successful from $KB'_{t^*}$. Moreover, as {\tt EXPLORE} continues to propose the
vertices of $G'_t$ not successful from $KB'_t$ on each step (since it is 
oblivious) until the vertex labeled by $\varphi$ is successful, another 
unsuccessful vertex must exist. Therefore the algorithm can only terminate 
before $t^*$ iterations if {\tt TEST} returns a proof and so either way 
Algorithm~\ref{mod-bw-alg} returns a proof of $\varphi$ in $\calP$ from some 
$KB'_t$.

The running time is bounded as follows: we see that the algorithm runs for no 
more iterations than before, and the final size of $KB'_t$ is at most 
$|KB|$ plus the algorithm's running time, since the algorithm creates vertices
representing each formula added to $KB'_t$. Ignoring the time to test the
proposed vertices for membership in $KB'_t$, the running time may be at most 
$T(N,|\varphi|,|KB'_t|)\leq T(N,|\varphi|,|KB|+T(N,|\varphi|,|KB|))$. Now, each
formula tested appears as a premise in some proof in $\calP$ and therefore has
a representation of size at most $B(|\varphi|)$. Since we can determine the
witnessed value of a formula on a given obscured scene in linear time 
the time bound follows.

We now argue that with probability $1-\delta$ over the example obscured scenes
provided to the algorithm as input, any proof that could be returned by 
Algorithm~\ref{mod-bw-alg} uses a knowledge base $KB\cup H'$ such that
the formula $h'_1\wedge\cdots\wedge h'_k$ (for $H'=\{h'_1,\ldots,h'_k\}$) is 
$(1-\epsilon)$-valid. This will establish the theorem as the existence of such a 
proof guarantees that the query $\varphi$ is $(1-\epsilon)$-valid, so if the first
case holds, the algorithm cannot produce a proof except with probability 
$\delta$; and likewise, in the second case, the proof returned by the algorithm 
is satisfactory with probability $1-\delta$.

To this end, we note that since $M$ is assumed to be $(1-
\eta(|\varphi|))$-concealing with respect to the premises that may appear on any
proof of $\varphi$ in $\calP$, for any proof using a set of premises that is 
{\em not} $(1-\epsilon)$-valid, Proposition~\ref{conceal-ce} shows that each 
example produces an obscured scene $\rho$ for which $h'_i|_\rho=0$ for some 
$h'_i$ used as a premise with probability at least $\epsilon\eta(|\varphi|)$. 
Therefore, in a sample of $\Omega(\frac{1}{\epsilon\eta(|\varphi|)}(B(|\varphi|)+
\log 1/\delta))$ independent obscured scenes, the probability that no premise of
such a proof has $h'_i|_\rho=0$ for any $\rho$ in the sample is at most
$\delta\cdot 2^{-B(|\varphi|)}$; as the proofs in $\calP$ for $\varphi$ have
encodings of at most $B(|\varphi|)$ bits, a union bound over these proofs gives
that the overall probability of some proof having no $\rho$ in the sample for
which $h'_i|_\rho=0$ for some premise in the proof is at most $\delta$.
Now, as the algorithm only returns proofs using premises contained in the sets 
$KB'_t$ which in particular do not contain formulas $h'$ such that 
$h'|_\rho=0$ for any $\rho$ in the sample, we see that with probability $1-
\delta$, the algorithm does not return a proof with a set of premises that is 
not $(1-\epsilon)$-valid, as needed.
\end{proof}

\subsubsection{Example: backward chaining}
If we represent chaining proofs by the sequence of inferred ground atomic 
formulas (using $\log N$ bits for each), then since any proof needs only write
down an atomic formula at most once, we can use the bound $B=N\log N$ in the 
statement of Theorem~\ref{explicit-search-thm}. (The size of the query $\ell$ is 
always the length of a single ground atomic formula, $\log N$ bits.)
So, after applying the transformation depicted in Algorithm~\ref{mod-bw-alg} to
the standard backward-chaining algorithm, Theorem~\ref
{explicit-search-thm} establishes that this modified backward chaining algorithm
finds chaining proofs of queries using not only ground atomic formulas from the 
explicitly given $KB$, but also additional formulas that are almost always true.
The modified algorithm automatically supplements a given $KB$ with any such
additional ground atomic formulas that suffice to complete some chaining proof
of a query if one exists, provided further that the masking process has bounded
concealment with respect to ground atomic formulas---meaning here, the masking 
process leaves the value of each ground atomic formula present with some bounded
probability when it is false. 

Chaining does not feature a rule of inference that allows new rules to be
derived, so chaining algorithms never need to consider rules outside $KB$. 
Hence, the proposed generic transformation does not learn such rules.
Other logics such as resolution, which allow richer kinds of formulas to be 
derived, yield more interesting query-driven learning.
In principle one could also consider a variant of backward-chaining in which
rules are learned as well. The difficulty with such a variant lies in
controlling the complexity of the search.

\subsection{Skeptical query-driven learning}
Theorem~\ref{explicit-search-thm} relies on an assumption of bounded concealment,
and uses a credulous learning strategy of searching for hypotheses for which we 
do not possess counterexamples. A more conservative strategy would replace the 
condition ``for no $\rho_i$ is $h|_{\rho_i}=0$'' in Algorithm~\ref{mod-bw-alg} 
with the condition ``for all $\rho_i$, $h|_{\rho_i}=1$.'' An $h$ that is 
witnessed true with probability 1
will pass this condition. Moreover, if $h|_{\rho_i}=1$ for $\rho_i=
m_i(x_i)$ (where $m_i$ is drawn from $M$ and $x_i$ is drawn from $D$), then 
$h|_{x_i}=1$ as well. Thus, the probability that an $h$ that is {\em not}
$(1-\epsilon)$-valid with respect to $D$ passes this test for $m$ examples is
less than $(1-\epsilon)^m$. We can use this observation in place of Proposition 
\ref{conceal-ce} to finish an analogous proof, given that we are searching for 
an
$H$ 
that is always {\em witnessed}
rather than one that is merely perfectly 
{\em valid}. We leave further details to an interested reader.

\subsubsection{Example application: learning input filters}
We note that our transformation for skeptical learning of  rules could be applied to
static program analysis algorithms to automate the generation of sound and 
approximately complete input filters. For example, the SIFT system~\cite{lskr14} 
is based on a set of sound transformation rules for analyzing integer overflow 
errors in a given program. Its associated static analysis algorithm uses the 
knowledge base given by these transformation rules and the program code to 
generate symbolic expressions that are propagated backwards from integer 
operations that might produce overflows, until they refer only to program inputs.
The condition expressed by these expressions may then be used to filter out 
inputs that do not satisfy the condition. The soundness of SIFT's transformation 
rules ensures that no input that passes this condition generates an integer 
overflow error.

We can interpret SIFT as taking the safety property of ``no integer overflows 
occur at the given point in the program'' as a query, and seeking a proof of this
query using a property of the input, together with the transformation rules and
program code. Note that once SIFT generates expressions that refer only to input 
values, the conditions they express are witnessed against example inputs, if they 
are $(1-\epsilon)$-valid for inputs in practice. Thus, we can apply the transformation of the
skeptical variant of Algorithm~\ref{mod-bw-alg} to the static analysis 
algorithm used by SIFT, and we would obtain an algorithm with a similar 
termination condition, with the added requirement that the condition found must
be witnessed on a set of given examples.

Thus, the distinction between the approach taken by Long et al.~and our 
transformation is that SIFT has no guarantee that it produces a rule that is 
testable on real inputs.\footnote{%
Nevertheless, it has been empirically demonstrated that for a certain piece of 
software, SIFT produces rules that are satisfied by real inputs with high
probability on a natural distribution~\cite{jmlsr15}.} SIFT does not use any training data, and simply 
terminates once it finds a rule that refers only to the input; thus, while this 
rule is sound -- inputs that satisfy it provably do not generate integer overflow
errors -- it has no guarantee of completeness, approximate or otherwise. Indeed, 
SIFT conservatively considers all possible execution paths and relatedly uses 
some hard-coded limits on, for example the number of loop iterations it considers
in search of loop invariants, to ensure termination. If this limit is exceeded 
because the loop potentially relies on an unbounded number of values (for 
example), then SIFT simply fails to find a condition. Thus, there is scope for a 
backtracking variant of SIFT's static analysis algorithm to obtain greater 
completeness by iteratively refining a symbolic condition. Under our 
transformation, we would then obtain an algorithm that searches for a condition 
on the inputs that empirically satisfies witnessing for a large fraction of a 
training set of benign inputs.

\subsection{Query-driven learning in treelike resolution}
\label{explicit-discovery}

Recall that {\em resolution} is a logic that operates on {\em clauses} 
(ORs of literals), using two kinds of inference rules: {\em cut} and (optionally)
{\em weakening}.
Cut takes two clauses $C=C'\vee \alpha$ and $D=D'\vee \neg \alpha$ 
and produces a clause of the form $C'\vee D'$.
(More general variants use substitutions to unify distinct atomic formuals
$\alpha$ and $\neg\alpha'$.) Weakening, by contrast, simply adds new literals
to the clause. Resolution is typically used to prove a DNF (an OR of ANDs) by 
deriving an (unsatisfiable) empty clause from its negation, a CNF.

DPLL~\cite{dp60,dll62} is another example of such a goal-directed search 
algorithm. In particular, bounded variants of DPLL that (efficiently) solve the 
proof search problem for space-bounded treelike resolution are known~\cite
{kullmann99,et01}. This is the fragment of resolution refutations that can be
derived while {\em (i)} storing at most $s$ clauses in memory simultaneously and
{\em (ii)} ``forgetting'' a clause as soon as it is used in a proof step (so 
that it must be derived again if it is needed again). 
This is a second example of an algorithm into which we 
can introduce query-driven explicit learning along the lines of Theorem~\ref
{explicit-search-thm}. The algorithm will be more interesting because it will
discover a CNF that suffices to complete the proof out of an exponentially
large (in terms of the number of ground atomic formulas) set of possible such 
formulas. 

Unfortunately, we cannot apply Theorem~\ref{explicit-search-thm} directly, as
the standard algorithm is not actually {\em oblivious} in our strict sense. The 
difficulty is that the base case of the recursive algorithm involves a search 
for a hypothesis clause for which we can derive the current clause via weakening.
This ``looking ahead'' into the hypothesis set means that the algorithm may not 
engage in exactly the same search pattern if we modify the hypothesis set during 
the search. Nevertheless, the use is innocuous enough that essentially the same 
technique can be used to give a variant of the algorithm that learns an explicit 
set of clauses from the data.

\begin{algorithm}[t]
\DontPrintSemicolon
\SetKwFunction{LSS}{Learn+SearchSpace}
\SetKwInOut{Input}{input}\SetKwInOut{Output}{output}

\Input{CNF $\varphi$, integer space bound $s\geq 1$, current clause $C$,
list of obscured scenes $\rho^{(1)},\ldots,\rho^{(m)}$.}

$\LSS(\varphi,s,C,(\rho^{(1)},\ldots,\rho^{(m)}))$
\Begin{
\If{No $\rho^{(i)}$ has $C|_{\rho^{(i)}}=0$}{
\Return{A proof asserting $C$ (from $H$).}}
\ElseIf{$C$ is a superset of some clause $C'$ of $\varphi$}{
\Return{The weakening derivation of $C$ from $C'$.}}
\ElseIf{$s>1$}{
\ForEach{Literal $\ell$ such that neither $\ell$ nor $\bar{\ell}$ is in $C$}{
\If{$\Pi_1\leftarrow$\LSS$(\varphi,s-1,C\vee \ell,(\rho^{(i)}:\ell|_{\rho^{(i)}}
\neq 1))$
does not return {\em none}}{
\If{$\Pi_2\leftarrow$\LSS$(\varphi,s,C\vee\bar{\ell},(\rho^{(i)}:
\ell|_{\rho^{(i)}}\neq 0))$ does not return {\em none}}{
\Return{Derivation of $C$ from $\Pi_1$ and $\Pi_2$}
}
\lElse{
\Return{{\em none}}
}
}
}
}
\Return{{\em none}}
}
\caption{Space-bounded resolution with learning}\label{mod-search-space}
\end{algorithm}

Our analysis of explicit query-driven learning (following 
Theorem~\ref{explicit-search-thm}) still requires a bound on the lengths of the 
proofs in terms of the space (and number of ground atomic formulas). 
We will need the observation that DPLL-like algorithms produce {\em
normal} resolution proofs:

\begin{definition}[Normal]
We will say that a resolution proof is {\em normal} if in its corresponding DAG:
\begin{compactenum}
\item All outgoing edges from Cut nodes are directed to Cut nodes.
\item The clauses labeling any path to the sink from a Cut node contain literals
using every variable along the path.
\item A given variable is used in at most one cut step and at most one weakening
step along every path from a source to a Cut node.
\end{compactenum}
\end{definition}

\noindent
Our bound is now given in the following lemma, slightly modified from the work of 
Ehrenfeucht and Haussler~\shortcite{eh89}. 


\begin{lemma}[Lemma 1, Ehrenfeucht and Haussler 1989]\ 
\label{space-count}
Let $k$ be the number of nodes in a space-$s$ normal treelike resolution 
proof over $N$ ground atomic formulas where $N\geq s\geq 1$. Then,
\begin{compactenum}
\item $2^s-1\leq k\leq 2(eN/(s-1))^{s-1}$
where $e$ is the base of the natural logarithm.
\item There are at most $(8N)^{(eN/(s-1))^{s-1}}$ space-$s$ normal
treelike proofs that do not use weakening.
\end{compactenum}
\end{lemma}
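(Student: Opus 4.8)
The plan is to reduce both parts to a combinatorial statement about the Cut-subtree of the proof, viewed as a decision tree, and then invoke the standard rank-based counting. By condition~1 of normality, once a derivation performs a Cut every subsequent step toward the sink is also a Cut, so the Cut nodes form a binary tree $T$ (each Cut has two premises) whose leaves are input clauses or single weakening derivations from them. I would first argue that the space $s$ of the proof equals the black-pebbling number of $T$: by the Esteban--Tor\'an characterization of treelike resolution space~\cite{et01}, the number of clauses that must be kept in memory to evaluate a treelike proof is exactly the pebbling number of its underlying tree. Condition~3 ensures that each variable is cut at most once along any source-to-sink path, so every root-to-leaf path in $T$ has length at most $N$; condition~2 guarantees that the clause labels are recoverable from the sequence of cut variables along a path. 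Under this correspondence $T$ is precisely a decision tree of rank $s-1$ over at most $N$ variables, querying each variable at most once per path.

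For the lower bound of part~1, I would note that a binary tree of pebbling number $s$ (rank $s-1$) must contain a complete binary tree of height $s-1$, which already has $2^s-1$ nodes; hence $k\ge 2^s-1$. For the upper bound, let $L(N,s)$ denote the maximum number of leaves of such a tree. The pebbling (equivalently, rank) recurrence forces that at the root Cut one subtree has pebbling number at most $s$ and the other at most $s-1$, and since a variable is consumed both subtrees range over at most $N-1$ remaining variables; this yields $L(N,s)\le L(N-1,s)+L(N-1,s-1)$ with base cases $L(N,1)=1$ and $L(0,s)=1$. Solving the recurrence gives $L(N,s)\le\sum_{i=0}^{s-1}\binom{N}{i}\le(eN/(s-1))^{s-1}$ by the usual bound on partial binomial sums. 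Since a binary tree has one more leaf than internal node, the total node count satisfies $k\le 2L(N,s)-1\le 2(eN/(s-1))^{s-1}$, completing part~1.

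For part~2, I would count proofs by arguing that a weakening-free normal treelike proof is completely determined by the shape of the Cut-tree $T$ together with the literal cut at each internal node. Indeed, with no weakening the sink is the empty clause, and reading condition~2 top-down, the clause at every node is the set of literals accumulated by the cuts below it, so fixing the cut literals reconstructs every clause---in particular every leaf, which must then be an input clause. The number of binary tree shapes with at most $L:=(eN/(s-1))^{s-1}$ leaves is a Catalan number bounded by $4^L$, and each of the fewer-than-$L$ internal nodes may be labeled by one of $2N$ signed cut literals, giving at most $(2N)^L$ labelings. Multiplying yields at most $4^L(2N)^L=(8N)^{(eN/(s-1))^{s-1}}$ proofs, as claimed.

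The main obstacle I expect is making the identification of the space parameter $s$ with the pebbling/rank of the Cut-tree fully rigorous: one must check that the three normality conditions exactly cut general treelike resolution down to the decision-tree picture, so that the space used by the DPLL-style algorithm coincides with the tree's pebbling number and no variable is re-queried. Once this structural correspondence is pinned down, both the leaf-counting recurrence and the shape-times-labels argument are routine.
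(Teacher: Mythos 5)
Your proposal is correct and follows essentially the same route as the paper's own proof: the same lower bound via the minimal space-$s$ proof being a complete binary tree, the same leaf recurrence $L(N,s)\le L(N-1,s)+L(N-1,s-1)$ solved by $\sum_{i=0}^{s-1}\binom{N}{i}\le (eN/(s-1))^{s-1}$, and the same shape-times-labels (Catalan $\times (2N)$ per internal node) count for part~2. Your invocation of the Esteban--Tor\'an pebbling characterization is just a differently packaged version of the structural fact the paper cites (that at the final cut one sub-derivation needs space at most $s-1$), so the two arguments are substantively identical.
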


\begin{theorem}
\label{space-explicit-thm}
Let a clause $C$ and a (KB) CNF $\varphi$ be given. Suppose the examples are
drawn from a masking process that is $(1-\eta)$-concealing
with respect to CNFs of size $(eN/s-1)^{s-1}$ for the distribution $D$;
suppose further that $\varphi$ is perfectly valid with respect to $D$ and there 
exists some other perfectly valid CNF $H$ for which there is a space-$s$ 
treelike resolution proof of $C$ from $\varphi\wedge H$. Then, Algorithm~\ref
{mod-search-space} run on $\varphi$ and $C$ with parameter $s$ on a sample of size
$\Theta((N^{s-1}\log N+\log\frac{1}{\delta})\frac{1}{\eta\epsilon})$ runs in time
$O(\frac{N^{2(s-1)}|\varphi|}{\eta\epsilon}(N^{s-1}\log N+\log\frac{1}{\delta}))$
and returns a proof of
$C$ from $\varphi\wedge H'$ for some CNF $H'$ of size $O(N^{s-1})$ that is
$(1-\epsilon)$-valid with respect to $D$ with probability $1-\delta$.
Similarly, if $[\varphi\Rightarrow C]$ is not $(1-\epsilon)$-valid, Algorithm~\ref
{mod-search-space} rejects in the same time bound using the same number of 
examples.
\end{theorem}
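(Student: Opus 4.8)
The plan is to mirror the structure of the proof of Theorem~\ref{explicit-search-thm}, adapting it to the recursive structure of Algorithm~\ref{mod-search-space} and to the specific combinatorics of space-bounded treelike resolution supplied by Lemma~\ref{space-count}. The argument splits into three parts: correctness of the proof that is returned, the running time bound, and the probabilistic guarantee that the learned CNF $H'$ is $(1-\epsilon)$-valid.

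First I would address correctness. The key observation is that Algorithm~\ref{mod-search-space} deviates from the ordinary space-$s$ treelike resolution search only in its base case: instead of asserting the current clause $C$ only when it is already a superset of a clause of $\varphi$ (a weakening step), it also asserts $C$ as a learned hypothesis whenever no example $\rho^{(i)}$ witnesses $C|_{\rho^{(i)}}=0$. I would argue that, under the hypothesis that a perfectly valid CNF $H$ exists for which there is a space-$s$ treelike proof of $C$ from $\varphi\wedge H$, the modified search explores the recursion tree in exactly the pattern that the standard algorithm would follow on input $\varphi\wedge H$, except that it may terminate a branch \emph{earlier} when a subgoal clause passes the data-driven test rather than requiring it to appear in $\varphi\wedge H$ explicitly. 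This is the analogue of obliviousness: the only effect of the enlarged hypothesis set is to cut off branches at recognized clauses, so the proof that the recursion does find \emph{some} proof of $C$ from $\varphi\wedge H'$ follows by induction on the space parameter $s$, using the fact that $\varphi$ itself is perfectly valid so its clauses always pass the test.

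Next I would bound the running time and the sample complexity using Lemma~\ref{space-count}. By part~1 of the lemma, a space-$s$ normal treelike proof has at most $2(eN/(s-1))^{s-1}=O(N^{s-1})$ nodes, so the learned CNF $H'$ contains $O(N^{s-1})$ clauses. By part~2, the number of candidate space-$s$ normal weakening-free proofs is at most $(8N)^{(eN/(s-1))^{s-1}}$, whose logarithm is $O(N^{s-1}\log N)$; this is the quantity that replaces $B(|\varphi|)$ in the union bound of Theorem~\ref{explicit-search-thm}. Setting the sample size to $\Theta((N^{s-1}\log N+\log\frac1\delta)\frac{1}{\eta\epsilon})$ and invoking Proposition~\ref{conceal-ce} exactly as in the earlier proof: any proof whose premise-conjunction is not $(1-\epsilon)$-valid produces a falsifying obscured scene with probability at least $\eta\epsilon$ per example, so the probability it survives all examples is at most $\delta\cdot(8N)^{-(eN/(s-1))^{s-1}}$, and a union bound over all candidate proofs leaves total failure probability at most $\delta$. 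Since the algorithm only asserts clauses $C$ with no witnessed counterexample in the sample, with probability $1-\delta$ the returned $H'$ has a $(1-\epsilon)$-valid conjunction, which in turn certifies $[\varphi\Rightarrow C]$ is $(1-\epsilon)$-valid; the contrapositive gives rejection when $[\varphi\Rightarrow C]$ is not $(1-\epsilon)$-valid. The time bound $O(\frac{N^{2(s-1)}|\varphi|}{\eta\epsilon}(N^{s-1}\log N+\log\frac1\delta))$ comes from multiplying the number of recursive calls ($O(N^{s-1})$ nodes, each branching over $O(N)$ literals) by the per-node cost of testing each of the $\Theta((N^{s-1}\log N+\log\frac1\delta)\frac{1}{\eta\epsilon})$ examples against clauses of size $O(|\varphi|)$.

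The main obstacle I expect is the correctness argument in the first part, specifically justifying that the lookahead in the weakening base case does not break the emulation. Unlike the strictly oblivious algorithms of Theorem~\ref{explicit-search-thm}, here the search pattern genuinely depends on which clauses are recognized as terminal, because a branch that would have recursed further in the standard algorithm may instead be cut short once $C$ passes the data test. I would handle this by arguing that premature termination is harmless: asserting $C$ early only \emph{adds} $C$ to the effective hypothesis set, and because we have restricted to normal proofs (via Lemma~\ref{space-count}) the relevant literals along each path are controlled, so any proof the modified algorithm returns remains a valid space-$s$ treelike proof of $C$ from $\varphi\wedge H'$. The care needed is in checking that the filtered example lists passed in the two recursive calls (restricting to $\rho^{(i)}$ with $\ell|_{\rho^{(i)}}\neq 1$ and $\ell|_{\rho^{(i)}}\neq 0$ respectively) correctly track witnessing of the child clauses under partial evaluation, so that the data-driven test at a child node is consistent with the test that would have been applied to the same clause at the top level. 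I leave the detailed verification of this bookkeeping, which follows the definition of partial evaluation, to the reader.
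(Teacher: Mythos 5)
Your second half---the counting and union-bound argument---is essentially the paper's: by part~1 of Lemma~\ref{space-count} the leaf-CNF of any space-$s$ normal treelike proof has $O(N^{s-1})$ clauses, so Proposition~\ref{conceal-ce} gives a per-example falsification probability of $\eta\epsilon$ for any such CNF that is not $(1-\epsilon)$-valid; by part~2 there are $N^{O(N^{s-1})}$ candidate proofs up to weakening (and weakening steps are only introduced against the fixed input $\varphi$), so the stated sample size supports a union bound, and soundness of the returned proof certifies $(1-\epsilon)$-validity of $[\varphi\Rightarrow C]$, giving rejection in the other case. This part is fine.

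The genuine gap is in your first part, the completeness argument, and it is exactly the obstacle you flagged but did not resolve. Your induction is run relative to the run of the standard algorithm on $\varphi\wedge H$, with the claim that data-driven early termination is ``harmless.'' But Algorithm~\ref{mod-search-space} \emph{commits}: at a node with clause $C$ it recurses on the \emph{first} literal $\ell$ for which $\Pi_1$ succeeds, and if $\Pi_2$ for that same $\ell$ then fails, it returns \emph{none} without trying any other literal. A data-driven success of $\Pi_1$ at a literal where the run on $\varphi\wedge H$ would have failed therefore redirects the whole search to the branch $C\vee\bar{\ell}$ at space $s$, about which the run on $\varphi\wedge H$ tells you nothing; ``premature termination only adds hypotheses'' establishes soundness of whatever is output, but not that the algorithm avoids outputting \emph{none}. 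The paper sidesteps this entirely: define $\tilde{H}$ to be the set of \emph{all} clauses consistent with the sample. Then $\tilde{H}$ is closed under weakening, and (by precisely the bookkeeping about the filtered example lists that you deferred to the reader) the data test at every node is equivalent to ``$C$ is a weakening of a clause of $\tilde{H}$,'' so Algorithm~\ref{mod-search-space} \emph{is} the standard space-bounded search run verbatim with the fixed knowledge base $\varphi\wedge\tilde{H}$. Since $H$ is perfectly valid, $H\subseteq\tilde{H}$ with probability $1$, so a space-$s$ proof from $\varphi\wedge\tilde{H}$ exists, and the standard completeness analysis of that search (Kullmann) yields both the output of some proof and the $O(m|\varphi|N^{2(s-1)})$ time bound---no step-by-step emulation of the run on $\varphi\wedge H$ is needed. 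Equivalently, your induction can be repaired only by restating its hypothesis relative to $\varphi\wedge\tilde{H}$ rather than $\varphi\wedge H$, which is the paper's argument.
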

\begin{proof}
First, consider the (possibly exponential size) formula $\tilde{H}$ 
consisting of all clauses that are consistent with the obscured scenes 
$\rho^{(1)},\ldots,\rho^{(m)}$. Noting we don't need to use weakening for 
clauses from $\tilde{H}$ since any clause that would be derived by weakening 
is also in $\tilde{H}$, the standard analysis of algorithms for space-bounded
treelike resolution~\cite{kullmann99} establish that  Algorithm~\ref
{mod-search-space} finds a normal space-$s$ treelike resolution proof of the 
input $C$ from $\varphi\wedge\tilde{H}$ and runs in time 
$O(m|\varphi|N^{2(s-1)})$ on a sample of size $m$. When a $1$-valid space-$s$ 
treelike proof is assumed to exist, it is in particular consistent with every
sample with probability $1$, so the algorithm outputs some proof in this case. 
It remains only to show that, for a 
sample of size $\Theta(\frac{1}{\eta\epsilon}(N^{(s-1)}\log N+\log\frac{1}
{\delta}))$, any such proof has its leaves labeled with a $(1-\epsilon)$-valid CNF
with probability at least $1-\delta$ (and so in particular, the algorithm cannot
return a proof if $[\varphi\Rightarrow C]$ is not $(1-\epsilon)$-valid).

Consider any space-$s$ normal treelike resolution proof that has 
leaves that are {\em not} labeled by a $(1-\epsilon)$-valid CNF. Since, by part 
1 of Lemma~\ref{space-count}, this CNF has at most $O(N^{s-1})$ clauses, 
Proposition~\ref{conceal-ce} shows that each example produces a counterexample 
to this CNF with probability at least $\eta\epsilon$. Thus, in a sample of size 
$\Omega(\frac{1}{\eta\epsilon}(N^{s-1}\log N+\log\frac{1}{\delta}))$, the 
probability that this CNF is consistent with the sample is at most $\delta N^{-
\Omega(N^{s-1})}$. Now, by part 2 of Lemma~\ref{space-count}, there are at 
most $N^{O(N^{s-1})}$ possible proofs (up to weakening steps), where we notice
that Algorithm~\ref{mod-search-space} introduces weakening steps iff the clause 
is consistent with some clause of the (fixed) input CNF $\varphi$. Therefore, the
algorithm indeed considers at most $N^{O(N^{s-1})}$ distinct proofs, so for 
a suitable choice of constant in the sample size, a union bound gives the 
probability that Algorithm~\ref{mod-search-space} encounters some proof from a 
CNF that is {\em not} $(1-\epsilon)$-valid but consistent with the sample is at 
most $\delta$. Since the algorithm only outputs proofs that are consistent
with the sample, we therefore find that any proof it outputs is derived from a 
CNF that is $(1-\epsilon)$-valid with respect to $D$ with probability at least
$1-\delta$, establishing the claim.
\end{proof}

Lemma~\ref{space-count} also shows treelike proofs of size $k$ are necessarily 
space-$\log(k+1)$, so this gives a quasipolynomial time and sample complexity
algorithm for general treelike proofs.

\section{Directions for future work}

One main direction for future work concerns a slightly relaxed variant of the
skeptical learning strategy, in which we try to learn rules that are 
simultaneously witnessed true with maximum frequency, which may be less than 
$1$.
What makes this formulation challenging is that it cannot be achieved by just
seeking that the formulas at the individual nodes are each witnessed with some
probability $1-\epsilon$: it matters which $\epsilon$-fraction of example scenes
are not witnessed across the various rules,
since we are interested in how many examples in total fail to witness at least
one of the hypotheses used in the proof. This formulation seems to be a much
harder computational problem, so it makes sense to ask what kind of approximate
solutions can be obtained.

Another direction for work concerns relational reasoning algorithms. The usual
algorithms for backward search in relational reasoning generate substitutions
during the search, rather than generating grounded versions of the rules as we
do here. Although this creation of substitutions may create a ``non-oblivious''
search (as viewed on the ground atomic formulas), we suspect that it may again
be innocuous enough that our main theorem will still hold, as in the example of
weakening in resolution. Such algorithms would be much more efficient, of 
course.
The main difference is that we now need to identify unifiers against the set of
(all) example scenes (in addition to formulas in the KB).

A related question, along the lines of the above but perhaps more ambitious is,
can we learn universally quantified expressions in infinite or open domains? 
Again, our current method generates ground expressions, and so we can only
consider domains with a reasonable number of elements. But, the credulous 
bounded
concealment definition at least raises the possibility that we might be able
to infer a universally quantified statement based merely on the lack of observed
counterexamples. Of course, such inferences lean heavily on the bounded 
concealment assumption.

A final direction is the development of further applications of such
algorithms. In addition to the natural application of the
algorithm in extracting knowledge that is suitable for human inspection in query
driven learning, we have identified two domains in which the kind
of sound and approximately complete rules our method generates would provide a 
useful filtering criterion. It is natural to ask if there are any others.

\section*{Acknowledgements}
This work was supported in part by ONR grant number N000141210358 while the
author was affiliated with Harvard University, and by NSF Award CCF-1718380. 
This manuscript was prepared in part while the author was visiting the Simons 
Institute for the Theory of Computing and Google.

\bibliographystyle{named}
\bibliography{robust}

\begin{thebibliography}{}

\bibitem[\protect\citeauthoryear{Davis and Putnam}{1960}]{dp60}
Martin Davis and Hilary Putnam.
\newblock A computing procedure for quantification theory.
\newblock {\em JACM}, 7(3):201--215, 1960.

\bibitem[\protect\citeauthoryear{Davis \bgroup \em et al.\egroup
  }{1962}]{dll62}
Martin Davis, George Logemann, and Donald~W. Loveland.
\newblock A machine program for theorem-proving.
\newblock {\em Communications of the ACM}, 5(7):394--397, 1962.

\bibitem[\protect\citeauthoryear{Davis \bgroup \em et al.\egroup
  }{2017}]{dmo17}
Ernest Davis, Leora Morgenstern, and Charles~L Ortiz.
\newblock The first {W}inograd {S}chema {C}hallenge at {IJCAI}-16.
\newblock {\em AI Magazine}, 38(3):97--98, 2017.

\bibitem[\protect\citeauthoryear{Ehrenfeucht and Haussler}{1989}]{eh89}
Andrzej Ehrenfeucht and David Haussler.
\newblock Learning decision trees from random examples.
\newblock {\em Inf. Comp.}, 82(3):231--246, 1989.

\bibitem[\protect\citeauthoryear{Esteban and Tor{\'{a}}n}{2001}]{et01}
Juan~Luis Esteban and Jacobo Tor{\'{a}}n.
\newblock Space bounds for resolution.
\newblock {\em Inf. Comp.}, 171(1):84--97, 2001.

\bibitem[\protect\citeauthoryear{Halpern}{1990}]{Halpern90}
Joseph~Y. Halpern.
\newblock An analysis of first-order logics of probability.
\newblock {\em Artificial Intelligence}, 46:311--350, 1990.

\bibitem[\protect\citeauthoryear{Isaak and Michael}{2016}]{im16}
Nicos Isaak and Loizos Michael.
\newblock Tackling the {W}inograd {S}chema {C}hallenge through machine logical
  inferences.
\newblock In David Pearce and Helena~Sofia Pinto, editors, {\em STAIRS}, volume
  284 of {\em Frontiers in Artificial Intelligence and Applications}, pages
  75--86. IOS Press, 2016.

\bibitem[\protect\citeauthoryear{Juba \bgroup \em et al.\egroup
  }{2015}]{jmlsr15}
Brendan Juba, Christopher Musco, Fan Long, Stelios Sidiroglou, and Martin
  Rinard.
\newblock Principled sampling for anomaly detection.
\newblock In {\em Proc. NDSS}, 2015.

\bibitem[\protect\citeauthoryear{Juba}{2013}]{juba13}
Brendan Juba.
\newblock Implicit learning of common sense for reasoning.
\newblock In {\em Proc. 23rd IJCAI}, pages 939--946, 2013.

\bibitem[\protect\citeauthoryear{Khardon and Roth}{1997}]{kr97}
Roni Khardon and Dan Roth.
\newblock Learning to reason.
\newblock {\em J. ACM}, 44(5):697--725, 1997.

\bibitem[\protect\citeauthoryear{Kullmann}{1999}]{kullmann99}
Oliver Kullmann.
\newblock Investigating a general hierarchy of polynomially decidable classes
  of {CNF's} based on short tree-like resolution proofs.
\newblock Technical Report TR99-041, ECCC, 1999.

\bibitem[\protect\citeauthoryear{Lenat}{1995}]{lenat95}
Douglas~B. Lenat.
\newblock {CYC:} a large-scale investment in knowledge infrastructure.
\newblock {\em CACM}, 38(11):33--38, 1995.

\bibitem[\protect\citeauthoryear{Long \bgroup \em et al.\egroup
  }{2014}]{lskr14}
Fan Long, Stelios Sidiroglou, Deokhwan Kim, and Martin Rinard.
\newblock Sound input filter generation for integer overflow errors.
\newblock In {\em Proc. 41st POPL}, 2014.

\bibitem[\protect\citeauthoryear{Michael and Valiant}{2008}]{mv08}
Loizos Michael and Leslie~G. Valiant.
\newblock A first experimental demonstration of massive knowledge infusion.
\newblock In {\em Proc. 11th KR}, pages 378--389, 2008.

\bibitem[\protect\citeauthoryear{Michael}{2010}]{michael10}
Loizos Michael.
\newblock Partial observability and learnability.
\newblock {\em Artificial Intelligence}, 174(11):639--669, 2010.

\bibitem[\protect\citeauthoryear{Muggleton and Buntine}{1988}]{mb88}
Stephen Muggleton and Wray Buntine.
\newblock Machine invention of first-order predicates by inverting resolution.
\newblock In {\em Proc. 5th ICML}, pages 339--352, 1988.

\bibitem[\protect\citeauthoryear{Muggleton and {De Raedt}}{1994}]{mdr94}
Stephen Muggleton and Luc {De Raedt}.
\newblock Inductive logic programming: Theory and methods.
\newblock {\em J. Logic Programming}, 19:629--679, 1994.

\bibitem[\protect\citeauthoryear{Muggleton}{1991}]{muggleton91}
Stephen Muggleton.
\newblock Inductive logic programming.
\newblock {\em New generation computing}, 8(4):295--318, 1991.

\bibitem[\protect\citeauthoryear{Nilsson}{1986}]{nilsson86}
Nils~J Nilsson.
\newblock Probabilistic logic.
\newblock {\em Artificial intelligence}, 28(1):71--87, 1986.

\bibitem[\protect\citeauthoryear{Rubin}{1976}]{rubin76}
Donald~B. Rubin.
\newblock Inference and missing data.
\newblock {\em Biometrika}, 63(3):581--592, 1976.

\bibitem[\protect\citeauthoryear{Valiant}{1984}]{valiant84}
Leslie~G. Valiant.
\newblock A theory of the learnable.
\newblock {\em Communications of the ACM}, 18(11):1134--1142, 1984.

\bibitem[\protect\citeauthoryear{Valiant}{2000}]{valiant00}
Leslie~G. Valiant.
\newblock Robust logics.
\newblock {\em Artificial Intelligence}, 117:231--253, 2000.

\bibitem[\protect\citeauthoryear{Valiant}{2006}]{valiant06}
Leslie~G. Valiant.
\newblock Knowledge infusion.
\newblock In {\em Proc. AAAI-06}, pages 1546--1551, 2006.

\end{thebibliography}

\end{document}